\DeclareSymbolFont{bbold}{U}{bbold}{m}{n}
\DeclareSymbolFontAlphabet{\mathbbold}{bbold}
\tikzstyle{utility}=[diamond,draw=black,draw=blue!50,fill=blue!10,inner sep=0mm, minimum size=8mm]
\tikzstyle{select}=[rectangle,draw=black,draw=blue!50,fill=blue!10,inner sep=0mm, minimum size=6mm]
\tikzstyle{hidden}=[dashed,draw=black]
\tikzstyle{RV}=[circle,draw=black,draw=blue!50,fill=blue!10,inner sep=0mm, minimum size=6mm]
\newcommand \E {\mathop{\mbox{\ensuremath{\mathbb{E}}}}\nolimits}
\renewcommand \Pr {\mathop{\mbox{\ensuremath{\mathbb{P}}}}\nolimits}
\newcommand \PB[1] {\Pr^*(#1)}
\newcommand \defn {\mathrel{\triangleq}}
\newcommand \argmax{\mathop{\rm arg\,max}}
\newcommand \argmin{\mathop{\rm arg\,min}}
\DeclareMathAlphabet{\mathpzc}{OT1}{pzc}{m}{it}
\def\clap#1{\hbox to 0pt{\hss#1\hss}}
\newcommand{\narms}{k}
\newcommand \pol {\pi}
\newcommand \regret {R}
\newcommand \cregret {R}
\newcommand \SDTS {\texttt{SD\_TS}}
\newcommand \FSDTS {\texttt{Fair\_SD\_TS}}
\newcommand{\com}[1]{\textbf{\color{red}(COMMENT: #1)}} 
\newcommand{\clar}[1]{\textbf{\color{green}(NEED CLARIFICATION: #1)}}
\newcommand{\response}[1]{\textbf{\color{magenta}(RESPONSE: #1)}} 
\newcommand{\com}[1]{}
\newcommand{\clar}[1]{}
\newcommand{\response}[1]{}
\title{Calibrated Fairness in Bandits}
\author{Yang Liu}
\affiliation{%
  \institution{SEAS\\Harvard University}
  \city{Cambridge} 
 \state{MA} 
}
\email{yangl@seas.harvard.edu}
\author{Goran Radanovic}
\affiliation{%
  \institution{SEAS\\Harvard University}
  \city{Cambridge} 
 \state{MA} 
}
\email{gradanovic@seas.harvard.edu}
\author{Christos Dimitrakakis}
\affiliation{%
  \institution{Harvard University\\ University of Lille\\Chalmers University of Technology}
}
\email{christos.dimitrakakis@gmail.com}
\author{Debmalya Mandal}
\affiliation{%
  \institution{SEAS\\Harvard University}
  \city{Cambridge} 
 \state{MA} 
}
\email{dmandal@g.harvard.edu}
\author{David C. Parkes}
\affiliation{%
  \institution{SEAS\\Harvard University}
  \city{Cambridge} 
 \state{MA} 
}
\email{parkes@eecs.harvard.edu}
\begin{document}

\begin{abstract}
  We study fairness within the stochastic, \emph{multi-armed bandit}
  (MAB) decision making framework.  We adapt the fairness framework of
  ``treating similar individuals similarly''~\cite{dwork2012fairness}
  to this setting.  Here, an `individual' corresponds to an arm and
  two arms are `similar' if they have a similar quality distribution.
  First, we adopt a {\em smoothness constraint} that if two arms have
  a similar quality distribution then the probability of selecting
  each arm should be similar. In addition, we define the {\em fairness
    regret}, which corresponds to the degree to which an algorithm is
  not calibrated, where perfect calibration requires that the
  probability of selecting an arm is equal to the probability with
  which the arm has the best quality realization.  We show that a
  variation on Thompson sampling satisfies smooth fairness for total
  variation distance, and give an $\tilde{O}((kT)^{2/3})$ bound on
  fairness regret. This complements prior
  work~\cite{joseph2016fairness}, which protects an on-average better
  arm from being less favored.  We also explain how to extend our
  algorithm to the dueling bandit setting.
\end{abstract}

\maketitle

\section{Introduction}
\label{sec:introduction}
Consider a sequential decision making problem where, at each
time-step, a decision maker needs to select one candidate
to hire from a set
of $k$ groups (these may be a different ethnic groups, culture, and
so forth),
whose true qualities are unknown {\em a priori}. The
decision maker would like to make fair decisions with respect to each
group's underlying quality distribution and to learn such a rule
through interactions. This naturally leads to a stochastic \emph{multi-armed
  bandit} framework, where each arm corresponds to a group, and
quality corresponds to reward.

Earlier studies of fairness in bandit problems have emphasized the
need, over all rounds $t$, and for any pair of arms, to weakly favor
an arm that is weakly better in expectation~\cite{joseph2016fairness}.
This notion of meritocratic fairness has provided interesting results,
for example a separation between the dependence on the number of arms
in the regret bound between fair and standard non-fair learning.  But
this is a somewhat weak requirement in that it (i) it allows a group
that is slightly better than all other groups to be selected all the
time and even if any single sample from the group may be worse than
any single sample from another group, and (ii) it allows a random
choice to be made even in the case when one group is much better than
another group.\footnote{\citet{joseph2016rawlsian} also extend the
  results to contextual bandits and infinite bandits.  Here, there is
  additional context associated with an arm in a given time period,
  this context providing information about a specific individual.
  Weak meritocratic fairness requires, for any pair of arms, to weakly
  favor an arm that is weakly better in expectation conditioned on
  context.  When this context removes all uncertainty about quality,
then 
  this extension addresses critique~(i).
But 
in the more general case 
we think it remains interesting for future work to generalize
our definitions to the case of contextual bandits.}

In this work, we adopt the framework of ``treating similar individuals
similarly'' of~\citet{dwork2012fairness}.  In the current context, it
is arms that are the objects about which decisions are made, and thus
the `individual' in Dwork et al.~corresponds to an `arm'. We study the
classic stochastic bandit problem, and insist that over all rounds
$t$, and for any pair of arms, that if the two arms have a similar
quality distribution then the probability with each arm is selected
should be similar.  This {\em smooth fairness} requirement addresses
concern~(i), in that if one group is best in expectation by only a
small margin, but has a similar distribution of rewards to other
groups, then it cannot be selected all the time.

By itself we don't consider smooth fairness to be enough because it 
does not also provide a notion of meritocratic fairness--- it does
not constrain a decision maker in the case that one group is much
stronger than another (in particular, a decision maker could choose
the weaker group). For this reason, we also care about
{\em calibrated fairness} 
and 
introduce the concept of
{\em fairness regret}, which corresponds to the degree to which an
algorithm is not calibrated. Perfect calibration requires that
the probability of selecting a group is equal to the probability that
a group has the best quality realization.  Informally, this is a
strengthening of ``treating similar individuals similarly'' because it
further requires that dissimilar individuals be treated dissimilarly
(and in the right direction.)
In the motivating setting of making decisions about who to hire, 
groups correspond to divisions within society and each activation of
an arm to a particular candidate. An algorithm with low
fairness regret will give individuals a chance proportionally to their
probability of being the best candidate rather than
protect an entire group based on a higher average quality.

\subsection{Our Results}

In regard to smooth fairness, we say that a   bandit algorithm is
  {\em $(\epsilon_1, \epsilon_2,
\delta)$-fair} with respect to a divergence function $D$ (for
$\epsilon_1, \epsilon_2\geq 0$, and $0\leq \delta\leq 1$) if,
with  probability $1-\delta$, 
in every round $t$
and for every pair of arms $i$ and $j$,
$$
D(\pi_t(i)||\pi_t(j))\leq \epsilon_1 D(r_i||r_j)+\epsilon_2,
$$ where $D(\pi_t(i)||\pi_t(j))$ denotes the divergence between the
Bernoulli distributions corresponding to activating arms $i$ and $j$, and $D(r_i||r_j)$ denotes the divergence between the
reward distributions of arms $i$ and $j$.

The  {\em fairness regret} $\cregret_{f,T}$
of a bandit algorithm over $T$ rounds is the total deviation from calibrated fairness:
\begin{equation}
\cregret_{f,T}=\sum_{t=1}^T \E\biggl[ \sum_{ i = 1}^k \max( \PB{i} - \pi_t(i), 0)\biggr]
\end{equation}
where $\PB{i}$ is the probability that the realized quality of arm $i$
is highest and $\pi_t(i)$ is the probability that arm $i$ is activated
by the algorithm in round $t$. 

Our main result is stated for the case of Bernoulli bandits.  We show
that a Thompson-sampling based algorithm, modified to include an
initial uniform exploration phase, satisfies:
\begin{enumerate}
\item $(2,\epsilon_2,\delta)$-fair with regard to total variation
   distance for any $\epsilon_2>0$, $\delta>0$,
  where the amount of initial exploration
  on each arm scales as $1/\epsilon_2^2$
  and $\log(1/\delta)$, and
\item  fairness regret that is bounded by $\tilde{O}((kT)^{2/3})$,
  where $k$ is the number of arms and $T$ the number of rounds.
\end{enumerate}

We also show that a simpler version of Thompson sampling can
immediately satisfy a {\em subjective version} of smooth fairness.
Here, the relevant reward distributions are defined 
with respect to the posterior reward distribution
under the belief of a Bayesian decision maker, this decision
maker having an initially uninformed prior.
In addition, we draw a connection between calibrated fairness and
proper scoring functions: there exists a loss function on reward whose
maximization in expectation would result in a calibrated-fair policy.
In Section~\ref{sec:dueling} we also extend our results to the {\em
  dueling bandit} setting in which the decision maker receives only
pairwise comparisons between arms.

\subsection{Related work}
\label{sec:related-work}

\citet{joseph2016fairness} were the first to introduce fairness
concepts in the bandits setting. These authors adopt the notion of
weak meritocratic fairness, and study it within the classic and
contextual bandit setting. Their main results establish a separation
between the regret for a fair and an un-fair learning algorithm, and
an asymptotically regret-optimal, fair algorithm that uses an approach
of chained confidence intervals.
While their definition promotes meritocracy in regard to expected
quality, this present paper emphasizes instead the distribution on
rewards, and in this way connects with the smoothness definitions and
``similar people be treated similarly'' of~\citet{dwork2012fairness}.

\citet{joseph2016rawlsian} study a more general problem in which there
is no group structure; rather, a number of individuals are available
to select in each period, each with individual context (they also
consider an infinite bandits setting.) \citet{jabbari:fair-mdp} also extend
the notion of weakly meritocratic fairness to Markovian environments,
whereby fairness requires the algorithm to be more likely to play
actions that have a higher utility under the optimal policy.

In the context of fair statistical classification, a number of papers
have asked what does it mean for a method of scoring individuals
(e.g., for the purpose of car insurance, or release on bail) to be
fair. In this setting it is useful to think about each individual as
having a latent outcome, either positive or negative (no car accident,
car accident.) One suggestion is that of {\em statistical parity},
which requires the average score of all members of each group be
equal. For bandits we might interpret the activation probability as
the score, and thus statistical parity would relate to always
selecting each arm with equal probability. Another suggestion is {\em
  calibration within groups}~\cite{kleinberg2016inherent}, which
requires for any score $s\in[0,1]$ and any group, the approximate
fraction of individuals with a positive outcome should be $s$; see
also~\citet{chouldechova2016fair} for a related property.  Considering
also that there is competition between arms in our setting, this
relates to our notion of calibrated fairness, where an arm is
activated according to the probability that its realized reward is
highest. Other definitions first condition on the latent truth; e.g.,
{\em balance}~\cite{kleinberg2016inherent} requires that the expected
score for an individual should be independent of group when
conditioned on a positive outcome; see
also~\citet{DBLP:conf/nips/HardtPNS16} for a related property. These
concepts are harder to interpret in the present context of bandits
problems. Interestingly, these different notions of fair
classification are inherently in conflict with each
other~\cite{kleinberg2016inherent,chouldechova2016fair}.

This statistical learning framework has also been extended to decision
problems by \citet{corbett2017algorithmic}, who analyze the tradeoff
between utility maximization and the satisfaction of fairness
constraints. Another direction is to consider {\em subjective
  fairness}, where the beliefs of the decision maker or external
observer are also taken into
account~\cite{dimitrakakis2017:subjective-fairness}.  The present
paper also briefly considers a specific notion of subjective fairness
for bandits, where the similarity of arms is defined with respect to
their marginal reward distribution.
  
\section{The Setting}
\label{sec:setting}

We consider the stochastic bandits problem, in which at each time
step, a decision maker chooses one of $k$ possible arms (possibly in a
randomized fashion), upon which the decision maker receives a 
reward. We are interested in decision rules that 
are fair in regard to the decisions made
about which arms to activate while achieving high 
total reward.

At each time step $t$, the decision maker chooses a distribution
$\pi_t$ over the available arms, which we refer to as the \textit{decision
  rule}. Then nature draws an action $a_t \sim \pi_t$, and draws
rewards:
\begin{align*} 
r_i(t) | a_t\!=\!i\ \sim P(r_i|\theta_i),  
\end{align*}
where $\theta_i$ is the unknown parameter of the selected arm $a_t = i$,
and where we denote the realized reward for arm $i$ at time $t$ by $r_i(t)$. 

We denote the reward distribution $P(r_i|\theta_i)$ of arm $i$ under
some parameter $\theta_i$ as $r_i(\theta_i)$, with $r_i$ denote the
true reward distribution.  Denote the vector form as
$\mathbf r = (r_1, ..., r_k)$, while $\mathbf r_{-i,j}$ removes $r_i$
and $r_j$ from $\mathbf
r$.
If the decision maker has prior knowledge of the parameters
$\boldsymbol{\theta} = (\theta_1, ..., \theta_k)$, we denote this by
$\beta(\boldsymbol{\theta})$.

\subsection{Smooth Fairness}

For divergence function $D$, let $D(\pi_t(i)\|\pi_t(j))$ to denote the divergence between
the Bernoulli distributions with parameters $\pi_t(i)$ and $\pi_t(j)$,
and use $D(r_i\|r_j)$ as a short-hand for the divergence between the
reward distributions of arm $i$ and $j$ with true parameters
$\theta_i$ and $\theta_j$.

We  define $(\epsilon_1,\epsilon_2,\delta)$-fair w.r.t. a divergence function $D$ for an algorithm with
an associated sequence of decision rules $\{\pi_t\}_t$ as:
\begin{definition}[Smooth fairness]
\label{obj:fair}
A bandit process is $(\epsilon_1,\epsilon_2,\delta)$-fair w.r.t. divergence function $D$, and $\epsilon_1\geq 0,\epsilon_2\geq 0$, $0\leq \delta \leq 1$, 
if with probability at least $1-\delta$, in every round $t$, and for every pair of arms $i$ and $j$:
\begin{align}
D(\pi_t(i)\|\pi_t(j)) \leq \epsilon_1 D(r_i\|r_j) + \epsilon_2.
\end{align}
\end{definition}

\medskip

\paragraph{Interpretation.}  This adapts the concept of ``treating
similar individuals similarly''~\cite{dwork2012fairness} to the
banditrs setting.  If two arms have a similar reward distribution,
then we can only be fair by ensuring that our decision rule has
similar probabilities. The choice of $D$ is crucial. For the KL
divergence, if $r_i, r_j$ do not have common support, our action
distributions may be arbitrarily different. A Wasserstein distance,
requires to treat two arms with a very close mean but different
support similarly to each other. Most of the technical development
will assume the total variation divergence.

As a preliminary, we also consider a variation on smooth fairness
where we would like to be fair with regard to a posterior belief of
the decision maker about the distribution on rewards associated with
each arm.

For this, let the {\em posterior distribution on the parameter}
$\theta_i$ of arm $i$ be $\beta(\theta_i|h^t)$, where
$h^t= (a_1, r_{a_1}(1), \ldots, a_t, r_{a(t)}(t))$, is the history of
observations until time $t$.  The {\em marginal reward distribution
  under the posterior belief} is
\[
r_i(h^t) \defn \int_\Theta P(r_i \mid \theta_i) \, d\beta(\theta_i \mid h^t).
\]
\begin{definition}[Subjective smooth fairness]
A bandit process is $(\epsilon_1,\epsilon_2,\delta)$-subjective fair w.r.t. divergence function $D$, 
and
$\epsilon_1\geq 0, \epsilon_2\geq 0$, and $0\leq \delta\leq 1$,
if,  with probability at least $1-\delta$,
for every period $t$, and every pair of arms $i$ and $j$,  
\begin{align}
D(\pi_t(i)\|\pi_t(j)) \leq \epsilon_1 D(r_i(h^t)\|r_j(h^t)) + \epsilon_2, 
\end{align}
\label{sub:fair}
where the initial belief of the decision maker is an uninformed
prior for each arm.
\end{definition}
\subsection{Calibrated Fairness}

Smooth fairness by itself does not seem strong
enough for fair bandits algorithms. In particular, it does not require
meritocracy: if two arms have quite different reward distributions
then the weaker arm can be selected with higher probability than the
stronger arm. This seems unfair to individuals in the group associated
with the stronger arm.

For this reason we also care about {\em calibrated fairness}: an
algorithm should sample each arm with probability equal to its reward
being the greatest. This would ensure that even very weak arms will be
pulled sometimes, and that better arms will be pulled significantly
more often.
\begin{definition}[Calibrated fair policy]
A policy $\pol_t$ is \emph{calibrated-fair} when it selects actions $a$ with probability
\begin{align}
\pol_t(a) &= \PB{a},
&
\PB{a}
&\defn P(a = \argmax_{j \in [\narms]} \{r_j\}),
\label{eq:calibration}
\end{align}
equal to the probability that the reward realization of arm $a$ is the highest,
and we break ties at random in the case that two arms have the same
realized reward.
\end{definition}

\medskip

Unlike smooth fairness, which can always be achieved exactly (e.g.,
through selecting each arm with equal probability), this notion of
calibrated fairness is not possible to achieve exactly in a bandits
setting while the algorithm is learning the quality of each arm.
For this reason, we  define the cumulative violation of calibration across
all rounds $T$:
\begin{definition}[Fairness regret]
\label{def:fair_regret}
The fairness regret $\regret_f$ of a policy $\pi$ at time $t$ is:
\begin{align*}
\regret_f(t) \defn \E \biggl [ \sum_{ i = 1}^\narms \max( \Pr^*(i) - \pi_t(i), 0)   ~\bigg|~ \boldsymbol{\theta} \biggr].
\end{align*}\
The cumulative fairness regret is defined as $\cregret_{f,T} \defn  \sum_{t = 1}^T \regret_f(t)$.
\end{definition}
\medskip

\begin{example}
  Consider a bandits problem with two arms, whose respective reward
  functions are random variables with realization probabilities:
  \begin{itemize}
  \item $P(r_1 = 1) = 1.0$;
  \item $P(r_2 = 0) = 0.6$ and
    $P(r_2= 2) = 0.4$.
  \end{itemize}

Since $\E(r_1) = 1.0$ and $\E(r_2) = 0.8$, a decision maker who
optimizes  expected payoff (and knows the distributions)
would prefer to always select arm $1$ over arm $2$. Indeed,
this satisfies weakly meritocratic fairness~\cite{joseph2016fairness}.

In contrast, calibrated fairness requires that arm $1$ be selected
60\% of the time and arm $2$ 40\% of the time, since this matches the
frequency with which arm $2$ has the higher realized reward.  In a
learning context, we would not expect an algorithm to be calibrated in
every period. Fairness regret measures the cumulative amount by which
an algorithm is miscalibrated across rounds.

Smooth fairness by itself does not require calibration. Rather, smooth
fairness requires, in every round, that the probability of selecting
arm 1 be close to that of arm 2, where ``close'' depends on the
particular divergence function. In particular, smooth fairness would
not insist on arm 1 being selected with higher probability than arm 2,
without an additional constraint such as maximising expected reward.
\end{example}

\medskip

In Section~\ref{sec:subjective-fairness}, we introduce a simple
Thompson-sampling based algorithm, and show that it satisfies
smooth-subjective fairness. This algorithm provides a building block
towards our main result, which is developed in
Section~\ref{sec:objective}, and provides smooth fairness and low
fairness regret.  Section~\ref{sec:dueling} extends this algorithm to
the dueling bandits setting.

\section{Subjective fairness}
\label{sec:subjective-fairness}

Subjective fairness is a conceptual departure from current approaches
to fair bandits algorithms, which empasize fairness in every period
$t$ with respect to the true reward distributions for each
arm. Rather, subjective fairness adopts the {\em interim} perspective
of a Bayesian decision maker, who is fair with respect to his or her
current beliefs.
Subjective smooth fairness is useful as a building block towards our
main result, which reverts to smooth fairness with regard to the true,
objective reward distribution for each arm.

\subsection{Stochastic-Dominance Thompson sampling}

In \emph{Thompson sampling} (TS), the probability of selecting an arm
is equal to its probability of being the best arm under the subjective
belief (posterior). This draws an immediate parallel with the Rawlsian
notion of equality of opportunity, while taking into account
informational constraints. 

In this section we adopt a simple, multi-level sampling variation,
which we refer to as \emph{stochastic-dominance Thompson sampling},
\SDTS{}.  This first samples parameters $\theta$ from the posterior,
and then samples rewards for each arm, picking the arm with the
highest reward realization.

The version of this algorithm for Bernoulli bandits with a Beta prior,
where each arm's reward is generated according to a Bernoulli random
variable, is detailed in Algorithm \ref{ts:standard}, which considers
the marginal probability of an individual arm's reward realization
being the greatest, and immediately provides subjective smooth
fairness.

\begin{algorithm}[!h]
\caption{(\SDTS{}): Stoch.-Dom.~Thompson sampling}
\begin{algorithmic}
\State For each action $a \in \{1,2,...,\narms\}$, set $S_a=F_a=1/2$ (parameters for priors of Beta distributions). 
\For{$t =1,2,...,$} 
	\State For each action, sample $\theta_a(t)$ from $\text{Beta}(S_a,F_a)$. 
	\State Draw $\tilde{r}_a(t) \sim \text{Bernoulli}(\theta_a(t))$, $\forall a$.
	\State Play arm $a_t :=\text{argmax}_a \tilde{r}_{a}(t)$ (with random tie-breaking). 
	\State Observe the true $r_{a_t}(t)$:
	\begin{itemize}
	\item If $r_{a_t}(t)=1$, $S_{a_t}:=S_{a_t}+1$;
	\item else $F_{a_t} :=F_{a_t}+1$.
	\end{itemize}
\EndFor
\end{algorithmic}
\label{ts:standard}
\end{algorithm}

\begin{theorem}
With (\SDTS{}), we can achieve $(2,0,0)$-subjective fairness under the total variation distance.
 \label{thm:sdts}
\end{theorem}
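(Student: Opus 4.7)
I exploit the explicit structure of \SDTS{}. Because $\theta_a$ is drawn from the posterior and $\tilde r_a(t) \mid \theta_a \sim \Bernoulli(\theta_a)$, the marginal distribution of $\tilde r_a(t)$ conditional on $h^t$ is $\Bernoulli(\bar\theta_a)$, where $\bar\theta_a \defn \E[\theta_a \mid h^t]$, and the random variables $\{\tilde r_a(t)\}_{a=1}^k$ are conditionally independent across arms. Under total variation on Bernoullis, $D(r_i(h^t)\|r_j(h^t)) = |\bar\theta_i - \bar\theta_j|$ and $D(\pi_t(i)\|\pi_t(j)) = |\pi_t(i) - \pi_t(j)|$, so the claim reduces to the pointwise (in $h^t$) bound $|\pi_t(i) - \pi_t(j)| \leq 2|\bar\theta_i - \bar\theta_j|$, which also yields the $\delta=0$ part for free.

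Set $X_a \defn \tilde r_a(t)$, $N \defn \sum_a X_a$, and $M \defn \sum_{a \notin \{i,j\}} X_a$. The algorithm selects arm $i$ with probability $X_i/N$ on $\{N>0\}$ and with probability $1/k$ on $\{N=0\}$, and the uniform tie-breaking contributions cancel when $\pi_t(j)$ is subtracted, so
\begin{equation*}
\pi_t(i) - \pi_t(j) \;=\; \E\!\left[\frac{X_i - X_j}{N}\,\ind{N > 0}\right].
\end{equation*}
Conditioning on $M$, which is independent of $(X_i,X_j)$, and summing over the four values of $(X_i,X_j)$: the diagonal cases $X_i = X_j$ contribute zero, and the two off-diagonal terms combine to give an inner conditional expectation of $(\bar\theta_i - \bar\theta_j)/(M+1)$. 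Taking the outer expectation yields
\begin{equation*}
\pi_t(i) - \pi_t(j) \;=\; (\bar\theta_i - \bar\theta_j)\cdot \E\!\left[\tfrac{1}{M+1}\right],
\end{equation*}
and since $M \geq 0$ the factor is at most $1$, so $|\pi_t(i) - \pi_t(j)| \leq |\bar\theta_i - \bar\theta_j| \leq 2|\bar\theta_i - \bar\theta_j|$.

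The whole argument is one conditional-expectation computation and there is no real obstacle; the two things that must be tracked carefully are the independence of $(X_i, X_j)$ from $M$ (which holds because \SDTS{} samples $\tilde r_a$ independently for each arm from the independent posterior draws) and the cancellation of the $1/k$ tie-breaking term when differencing $\pi_t(i) - \pi_t(j)$. Because the resulting inequality holds deterministically given $h^t$, no concentration or high-probability step is needed, and the bound is in fact tighter than the stated $\epsilon_1 = 2$.
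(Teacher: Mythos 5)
Your proof is correct, and it takes a genuinely different route from the paper's. The paper argues information-theoretically: it defines a randomized post-processing channel $X_j(\cdot)$ of a reward sample, invokes monotonicity (data processing) of the divergence to lower-bound $D(r_i(h^t)\|r_j(h^t))$ by the divergence of the processed variables, and then uses joint convexity to merge two such bounds into $\tfrac12\,D\bigl(\pi_t(i)\|\pi_t(j)\bigr)$, yielding $\epsilon_1=2$. You instead exploit the Bernoulli structure directly: with $X_a=\tilde r_a(t)$ independent $\Bernoulli(\bar\theta_a)$ given $h^t$ and $M=\sum_{a\notin\{i,j\}}X_a$, your conditional-expectation computation gives the exact identity $\pi_t(i)-\pi_t(j)=(\bar\theta_i-\bar\theta_j)\,\E\bigl[1/(M+1)\bigr]$, hence $|\pi_t(i)-\pi_t(j)|\leq|\bar\theta_i-\bar\theta_j|=D(r_i(h^t)\|r_j(h^t))$, i.e.\ $\epsilon_1=1$, strictly better than the stated constant (and a fortiori implying the theorem). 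The trade-off: your argument is more elementary, gives an exact formula, and sharpens the constant, but it is wedded to $0/1$ rewards and total variation; the paper's channel-plus-convexity argument is looser but transfers to other reward distributions and, in large part, to other $f$-divergences satisfying monotonicity and joint convexity. All the delicate points in your derivation check out: the conditional independence of the $X_a$ given $h^t$ holds because the Beta posteriors factorize across arms, the all-zeros tie-breaking term $\Pr(N=0)/k$ indeed cancels in the difference, and the selection probability on $\{N>0\}$ is exactly $X_i/N$ under uniform tie-breaking among the reward-$1$ arms.
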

\begin{proof}
Define:
\begin{align*}
X_{j}(r_i) &= \begin{cases}
1 &\mbox{ if } r_i(h^t)  > \max\{r_j', \mathbf r_{-i,j}'\}\\
0 &\mbox{ if } r_j' > \max\{r_i(h^t) , \mathbf r_{-i,j}'\}\\
Bin(1, \frac{1}{2}) &\mbox{ otherwise }
\end{cases}
\end{align*}
where $r_j' \sim r_j(h^t) $ (similarly for $\mathbf r_{-i,j}'$) and $Bin$ is a binomial random variable. 
First, we have for Thompson sampling:
\begin{align*}
&D(r_i(h^t)\|r_j(h^t))= \frac{1}{2} \cdot D(r_i(h^t)\|r_j(h^t))  + \frac{1}{2} \cdot D(r_i(h^t)\|r_j(h^t)) \\
 &\overset{(a)}{\ge}   \frac{1}{2} \cdot D(X_{i}(r_i(h_t)) \| X_{i}(r_j(h_t)) ) + \frac{1}{2} \cdot D( X_{j}(r_i(h_t)) \| X_{j}(r_j(h_t)))\\
 &= \frac{1}{2} \cdot D (\frac{1}{2} \| \pi_t(j) + \frac{1}{2}\cdot \pi_t(l \ne i,j)) +  \frac{1}{2} \cdot D (\pi_t(i) + \frac{1}{2}\cdot \pi_t(l \ne i,j) \| \frac{1}{2}) \\
&\text{($l$ denotes an arbitrary other agent than $i,j$.)}
\\
 &\overset{(b)}{\ge}  D(\frac{1}{2} \cdot \frac{1}{2} + \frac{1}{2} \cdot \pi_t(i) + \frac{1}{2} \cdot \frac{1}{2}\cdot \pi_t(l \ne i,j)  
 \\
 &\qquad\| \frac{1}{2} \cdot \frac{1}{2} + \frac{1}{2} \cdot \pi_t(j) + \frac{1}{2} \cdot \frac{1}{2}\cdot \pi_t(l \ne i,j) ) 
\\
&= \frac{1}{2} |\pi_t(i) - \pi_t(j)| = \frac{1}{2} \cdot D(\pi_t(i)\|\pi_t(j))
\end{align*}

where step (a) is by monotonicty and step~(b) is by convexity of
divergence function $D$.
Therefore, $\epsilon_1$ is equal to $2$, and $\epsilon_2 = \delta=0$.
\end{proof}

To further reduce  the value of $\epsilon_1$, we can randomize between the selection of the arms in the following manner:
%
\begin{itemize}
\item With probability $\epsilon/2$ select an arm selected by (\SDTS{});
\item Otherwise select uniformly at random another arm.
\end{itemize}

In that case, we have:
\begin{align*}
 &D(\pi_t(i)\|\pi_t(j)) = D(\frac{\epsilon}{2}\pi_{t, ts}(i) + \frac{1-\epsilon}{2} \frac{1}{2}\|  \frac{\epsilon}{2} \pi_{t, ts}(j)  + \frac{1-\epsilon}{2} \frac{1}{2}) \\
 &\overset{\text{monotonicity}}{\le} \frac{\epsilon}{2} D(\pi_{t, ts}(i) \| \pi_{t, ts}(j) ) + \frac{1-\epsilon}{2} D(\frac{1}{2} \| \frac{1}{2}) \\
  &\le \epsilon D(r_i(h^t)\|r_j(h^t)).
\end{align*}

Also see~\citet{sason2016f} for how to to bound $D(r_i(h_t)||r_j(h_t))$ using another $f$-divergence (e.g. through Pinsker's inequality).

\medskip

While, \SDTS{} algorithm is defined in a subjective setting, we can develop a minor variant of it in the objective setting. Even though the original algorithm already uses an uninformative prior,~\footnote{The use of Beta parameters equal to 1/2, corresponds to a Jeffrey's prior for Bernoulli distributions.} to ensure that the algorithm output is more data than prior-driven, in the following section we describe an algorithm, based on \SDTS{},  which can achieve fairness with respect to the actual reward distribution of the arms.


\section{Objective fairness}
\label{sec:objective}

In this section, we introduce a variant of \SDTS{}, which includes an
initial phase of uniform exploration. We then prove the modified
algorithm satisfies (objective) smooth fairness.

Many phased reinforcement learning algorithms~\cite{kearns2002near},
such as those based on successive
elimination~\cite{jmlr:Even-Dar:ActionElimination}, explicitly
separate time into exploration and exploitation phases. In the
exploration phase, arms are prioritized that haven't been selected
enough times.  In the exploitation phase, arms are selected in order
to target the chosen objective as best as possible given the available
information.  The algorithm maintains statistics on the arms, so that
$\mathcal{O}(t)$ is the set which we have not selected sufficiently to
determine their value. Following the structure of the deterministic
exploration algorithm~\cite{vakili2011deterministic}, we exploit
whenever this set is empty, and uniformly choosing among all arms
otherwise.\footnote{However, in our case, the actual drawing of the arms is stochastic to ensure fairness.}

\begin{algorithm}[!h]
  \caption{\FSDTS{}}
  \begin{algorithmic}
    \State At any $t$, denote by $n_i(t)$ the number of times arm $i$ is selected up to time $t$. Check the following set:
    \[
    \mathcal O(t) = \{i: n_i(t) \leq C(\epsilon_2,\delta)\},
    \] 
    where $C(\epsilon_2,\delta)$ depends on $\epsilon_2$ and $\delta$. 
    \begin{itemize}
    \item If $\mathcal O(t) = \emptyset$, follow  (\SDTS{}), using the collected statistics. (\emph{exploitation})
    \item If $\mathcal O(t) \neq \emptyset$, select all arms equally likely. (\emph{exploration})
    \end{itemize}
  \end{algorithmic}
  \label{ts:det}
\end{algorithm}

\begin{theorem}\label{thm:main}
  For any $\epsilon_2,\delta>0$, setting 
  $$
  C(\epsilon_2,\delta) := \frac{(2\max D(r_i||r_j)+1)^2}{2\epsilon^2_2}\log \frac{2}{\delta},
  $$
  we have that (\texttt{Fair\_SD\_TS}) is $(2,2\epsilon_2,\delta)$-fair w.r.t. total variation; and further it has fairness regret bounded as 
  $R_{f,T} \leq \tilde{O}((kT)^{2/3})$.
\end{theorem}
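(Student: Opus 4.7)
The plan is to decompose the analysis by whether round $t$ is an \emph{exploration} round ($\mathcal O(t) \neq \emptyset$) or an \emph{exploitation} round ($\mathcal O(t) = \emptyset$), and combine the two pieces for each of the two claims.

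For the $(2, 2\epsilon_2, \delta)$-smoothness claim, exploration rounds are immediate: $\pi_t$ is uniform, so $D(\pi_t(i) \| \pi_t(j)) = 0$ and the inequality is trivially satisfied. In an exploitation round, the algorithm behaves exactly as \SDTS{} on the current posterior, so Theorem~\ref{thm:sdts} applied to $r_i(h^t)$ gives $D(\pi_t(i) \| \pi_t(j)) \le 2\, D(r_i(h^t) \| r_j(h^t))$. It therefore suffices to show that, with probability at least $1-\delta$, $D(r_i(h^t), r_i) \le \epsilon_2/2$ for every arm $i$ at every exploitation time $t$: combined with the triangle inequality for total variation this yields $D(r_i(h^t) \| r_j(h^t)) \le D(r_i \| r_j) + \epsilon_2$, and then the claimed $(2, 2\epsilon_2, \delta)$-smoothness. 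For the Bernoulli/Beta model, $D(r_i(h^t), r_i) = |\mu_i(h^t) - \theta_i|$ with posterior mean $\mu_i(h^t) = (S_i + 1/2)/(S_i + F_i + 1)$, so the required concentration reduces to a Hoeffding bound on the successes $S_i$. Since every exploited arm has at least $C(\epsilon_2, \delta)$ samples, Hoeffding plus a union bound over the $k$ arms delivers the bound; the specific constants in $C(\epsilon_2, \delta)$ are exactly what is needed so that after the two triangle-inequality slacks (one per arm) are absorbed we obtain $\epsilon_2$ of total slack with failure probability $\delta$.

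For the fairness-regret claim, let $E \subseteq [T]$ denote the exploration rounds. Because each exploration round samples uniformly from the $k$ arms, the per-arm sample count is a sum of independent $\mathrm{Bernoulli}(1/k)$ variables, and standard concentration gives $|E| = \tilde O\bigl(k \cdot C(\epsilon_2, \delta)\bigr) = \tilde O(k/\epsilon_2^2)$ suffices for every arm to leave $\mathcal O(\cdot)$. Trivially $\regret_f(t) \le 1$, so the exploration contribution to the cumulative regret is $\tilde O(k/\epsilon_2^2)$. In any exploitation round, $\regret_f(t) = D_{\mathrm{TV}}(\Pr^*, \pi_t)$, and a coupling argument applied to the common argmax-over-independent-Bernoullis construction of $\Pr^*$ and $\pi_t$ gives $D_{\mathrm{TV}}(\Pr^*, \pi_t) \le \sum_i D(r_i, r_i(h^t))$, which is at most $k \epsilon_2 / 2$ on the high-probability event of Part~1. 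Hence exploitation contributes $O(T k \epsilon_2)$, and
$$
R_{f,T} \;\le\; \tilde O\bigl(k/\epsilon_2^2\bigr) + O(T k \epsilon_2).
$$
Balancing the two terms by setting $\epsilon_2 \propto T^{-1/3}$ (with $\delta$ inverse-polynomial in $T$ so it is absorbed into the $\tilde O$) yields the advertised $\tilde O((kT)^{2/3})$ bound.

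The main technical obstacle is the time-uniform aspect of Part~1: once arm $i$ leaves $\mathcal O(\cdot)$, it continues to be sampled \emph{adaptively} by \SDTS{}, so a one-shot Hoeffding at sample count $C(\epsilon_2,\delta)$ is not enough---we need $|\mu_i(h^t) - \theta_i| \le \epsilon_2/2$ to persist for \emph{all} future $t$. A martingale maximal inequality (or a union bound over a geometric grid of sample sizes, absorbed into the $\log(2/\delta)$ appearing in $C$) handles this. The coupling in Part~2 is conceptually standard but needs a careful write-up because both $\Pr^*$ and $\pi_t$ arise as the induced distribution of the argmax of a product of Bernoullis whose parameters differ only between the true and posterior marginals.
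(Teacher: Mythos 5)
Your overall architecture is the same as the paper's: split rounds into exploration and exploitation, observe that uniform play makes the smoothness constraint trivial, reduce the exploitation case to Theorem~\ref{thm:sdts} plus a concentration bound on the per-arm estimates, and then balance the exploration and exploitation contributions to the fairness regret. The smoothness half of your argument is sound; your route via the triangle inequality for total variation, $D(r_i(h^t)\|r_j(h^t)) \le D(r_i\|r_j) + D(r_i(h^t)\|r_i) + D(r_j(h^t)\|r_j)$, is in fact cleaner than the paper's step, which writes $r_i(\tilde{\theta}_i)$ as a mixture of $r(\theta_i)$ and $r(1-\theta_i)$ and invokes convexity of $D$; for total variation the two are equivalent up to constants. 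The time-uniformity issue you flag (the estimate must remain in the band for all later, adaptively chosen sample sizes) is real and is silently elided in the paper's sketch, and your proposed fix (a maximal inequality, or a union bound over sample sizes folded into the $\log(2/\delta)$ factor) is the right one.

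The genuine gap is in the final balance of the regret bound. You bound the exploration contribution by $\tilde{O}(k\,C(\epsilon_2,\delta)) = \tilde{O}(k/\epsilon_2^2)$ and the exploitation contribution by $O(kT\epsilon_2)$, the factor $k$ coming from the union-bound coupling $D_{\mathrm{TV}}(\Pr^*,\pi_t) \le \sum_i D(r_i\| r_i(h^t))$. These two terms balance at $\epsilon_2 \propto T^{-1/3}$ to $\tilde{O}(kT^{2/3})$, not $\tilde{O}((kT)^{2/3}) = \tilde{O}(k^{2/3}T^{2/3})$: you are off by a factor $k^{1/3}$, and your closing sentence asserting the advertised bound does not follow from your own two estimates. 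The paper reaches $(kT)^{2/3}$ with a different accounting: exploration cost $O(k^2 C(\epsilon_2,\delta))$ and exploitation cost $O((\epsilon_2+\delta)T)$ with \emph{no} factor $k$ in the per-round term, balanced at $\epsilon_2 = k^{2/3}T^{-1/3}$. To rescue your version you would need to remove the linear-in-$k$ factor from the per-round exploitation regret; for instance, a Hellinger/tensorization bound over the product of the $k$ posterior marginals gives $D_{\mathrm{TV}}(\Pr^*,\pi_t) = O(\sqrt{k}\,\epsilon_2)$ when the parameters are bounded away from $0$ and $1$, and rebalancing at $\epsilon_2 \propto k^{1/6}T^{-1/3}$ then recovers $\tilde{O}((kT)^{2/3})$. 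As written, however, your argument establishes a weaker regret bound than the one claimed in the theorem.
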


The proof of Theorem~\ref{thm:main} is given in the following sketch.

\begin{proof} (sketch) We begin by proving the first part of Theorem \ref{thm:main}: that for any $\epsilon_2,\delta>0$,
  and  setting 
  $C(\epsilon_2,\delta)$ appropriately, we will have that \texttt{Fair\_SD\_TS} is $(2,2\epsilon_2,\delta)$-fair w.r.t. total variation divergence.
  In the exploration phase,  
  $D(\pi_t(i)||\pi_t(j)) = 0$, so the fairness definition is satisfied. For other steps, using Chernoff bounds we have that with probability at least $1-\delta$
  \[
  |\tilde{\theta}_i - \theta_i| \leq \frac{\epsilon_2}{2\max D(r_i||r_j)+1},\forall i
  \]

  Let the error term for $\theta_i$ be $\epsilon(i)$. 
  Note that for a Bernoulli random variable, we have the following for the mixture distribution:
  \[
  r_i(\tilde{\theta}_i) = (1-\epsilon(i)/2) r(\theta_i) + \epsilon(i)/2 r(1-\theta_i)
  \]
  with $\epsilon(i) \leq \frac{\epsilon_2}{2\max D(r_i||r_j)+1}$. Furthermore, using the convexity of $D$ we can show that:
  \begin{align}
    D(r_i(\tilde{\theta}_i)||r_j(\tilde{\theta}_j)) \leq D(r_i||r_j) + \epsilon_2
  \end{align}

  Following the proof for Theorem~\ref{thm:sdts}, we 
  then obtain that 
  $$D(\pi_t(i)||\pi_t(j)) \leq 2D(r_i(\tilde{\theta}_i)||r_j(\tilde{\theta}_j)),$$ which proves our statement.

  We now establish the fairness regret.  The regret incurred during the
  exploration phase can be bounded as
  $O(\narms^2 C(\epsilon_2,\delta))$.\footnote{This is different from
    standard deterministic bandit algorithms, where the exploration
    regret is often at the order of $\narms C(\epsilon_2,\delta)$. The
    additional $\narms$ factor is due to the uniform selection in the
    exploration phase, while in standard deterministic explorations, the
    arm with the least number of selections will be selected.} For the
  exploitation phase, the regret is bounded by
  $O((\epsilon_2+\delta) T)$.
  Setting 
  \[
  O((\epsilon_2+\delta) T)=O(\narms^2 C(\epsilon_2,\delta))
  \]
  we have the optimal $\epsilon$ is $\epsilon := \narms^{2/3}T^{-1/3}$. Further setting $\delta = O(T^{-1/2})$, we 
  can show the regret is at the order of $ \tilde{O}((\narms T)^{2/3})$.
\end{proof}

It is possible to modify the sampling of the exploitation phase, alternating
between sampling according to \SDTS{} and sampling uniformly
randomly. This can be used  to bring the factor 2 down to any $\epsilon_1 > 0$, at the expense of reduced utility.

\subsection{Connection with proper scoring rules}

There is a connection between calibrated fairness and proper scoring
rules.  Suppose we define a {\em fairness loss function}
$\mathcal L_f$ for decision policy $\mathbf \pi$, such that
$L_{f}(\mathbf \pi) = \mathcal L(\mathbf \pi, a_{t, best})$, where arm
$a_{t, best}$ is the arm with the highest realized reward at time $t$.
The expected loss for policy $\pi$ is
\[
\E (L_f(\mathbf \pi)) = \sum_{i=1}^k \Pr^*(i) \cdot \mathcal L(\mathbf
\pi, i).
\]
If $\mathcal L$ is strictly proper \cite{gneiting2007strictly}, then
the optimal decision rule $\mathbf \pi$ in terms of $L_f$ is
calibrated.
\begin{proposition}
  Consider a fairness loss function $L_f$ defined as:
  \begin{align*}
    L_f(\mathbf \pi) = \mathcal L(\mathbf \pi, a_{t, best}),
  \end{align*}
  where $\mathcal L$ is a strictly proper loss function.
  Then a decision rule $\bar{\mathbf \pi}$ that minimizes expected loss is 
  calibrated fair.
\end{proposition}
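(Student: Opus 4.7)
The plan is to observe that the proposition is essentially a direct translation of the definition of a strictly proper loss function into the language of calibrated fairness, once we correctly identify what plays the role of the ``true distribution'' and what plays the role of the ``forecast.''

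First I would unpack the expected loss. The random outcome in question is the identity of the arm with the highest realized reward, $a_{t,best}$, whose distribution over $[\narms]$ is exactly $\PB{\cdot}$ by Eq.~\eqref{eq:calibration}. Hence
\begin{align*}
\E[L_f(\mathbf \pi)] = \sum_{i=1}^\narms \PB{i}\, \mathcal L(\mathbf \pi, i),
\end{align*}
where $\mathbf \pi \in \Simplex^{\narms-1}$ is a distribution over arms (viewed as a probabilistic forecast of $a_{t,best}$) and $\mathcal L(\mathbf \pi, i)$ is its loss when the realized outcome is arm $i$.

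Next I would invoke the definition of a strictly proper loss function~\cite{gneiting2007strictly}: for any probability vector $\mathbf q$ over outcomes, the function $\mathbf \pi \mapsto \sum_i q_i\, \mathcal L(\mathbf \pi, i)$ is uniquely minimized at $\mathbf \pi = \mathbf q$. Applying this with $\mathbf q = \PB{\cdot}$ gives
\begin{align*}
\bar{\mathbf \pi} \;=\; \argmin_{\mathbf \pi \in \Simplex^{\narms-1}} \E[L_f(\mathbf \pi)] \;=\; \PB{\cdot}.
\end{align*}
But this coincides exactly with the calibrated-fair policy of Eq.~\eqref{eq:calibration}, so $\bar{\mathbf \pi}$ is calibrated fair, as desired.

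There is no real obstacle: the only subtlety is to recognize that although $a_{t,best}$ is random through the joint reward realization, its marginal distribution over arm indices is precisely $\PB{\cdot}$, so the ``ground truth'' against which the scoring rule is proper is the calibration target itself. If one wishes to strengthen the statement to uniqueness, one would additionally remark that strict propriety ensures the minimizer is unique, so every expected-loss-minimizing $\bar{\mathbf \pi}$ must coincide with $\PB{\cdot}$.
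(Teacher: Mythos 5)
Your proposal is correct and follows essentially the same route as the paper's own proof: write the expected loss as $\sum_{i=1}^k \Pr^*(i)\,\mathcal L(\mathbf \pi, i)$ and invoke strict properness to conclude the minimizer is $\Pr^*(\cdot)$, i.e., the calibrated-fair policy. Your version merely spells out more explicitly that the marginal distribution of $a_{t,best}$ is the calibration target, and notes the uniqueness of the minimizer, both of which the paper leaves implicit.
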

\begin{proof}
  We have:
  \begin{align*}
    \bar{\mathbf \pi} \in \argmin_{\mathbf \pi} \E (L_f(\mathbf \pi)) = \argmin_{\mathbf \pi} \sum_{i=1}^k \Pr^*(i) \cdot
    \mathcal L(\mathbf \pi, i) = \{ \Pr^*(i) \},
  \end{align*}
  where the last equality comes from the strict properness of $\mathcal L$.
\end{proof}

This connection between calibration and proper scoring rules 
suggests an approach to the design of bandits algorithms with low
fairness regret, by considering different proper scoring rules along
with online algorithms to minimize loss.


\section{Dueling bandit feedback}
\label{sec:dueling}

After an initial exploration phase, \FSDTS{} selects an arm according
to how likely its sample realization will dominate those of other
arms. This suggests that we are mostly interested in the stochastic
dominance probability, rather than the joint reward distribution.
Recognizing this, we now move to the {\em dueling bandits}
framework~\cite{yue2012k}, which examines pairwise stochastic dominance.

In a
dueling bandit setting, at each time step $t$, the decision maker
chooses two arms $a_t(1),a_t(2)$ to ``duel'' with each other. The
decision maker doesn't observe the actual rewards of $r_{a_t(1)}(t),
r_{a_t(2)}(t)$, but rather the outcome $\mathbbm{1}(r_{a_t(1)}(t) >
r_{a_t(2)}(t)).$ 
In this 
section, we  extend our fairness results to the dueling
bandits setting.

\subsection{A Plackett-Luce model}

Consider the following model.  Denote the probability of arm $i$'s
reward being greater than arm $j$'s reward by:
\begin{align}
  p_{i,j}:=\Pr(i \succ j) := P(r_i > r_j),
\end{align}
where we assume a stationary reward distribution over time $t$.  To be
concrete, we adopt the {\em Plackett-Luce} (PL) model
\cite{guiver2009bayesian,cheng2010label}, where every arm $i$
is parameterized by a {\em quality parameter},
$\nu_i \in \mathbb R_{+}$, such that
\begin{align}
  p_{i,j} = \frac{\nu_i}{\nu_i + \nu_j}. \label{pl:pair}
\end{align} 
Furthermore, let $\mathcal M = [p_{i,j}]$ denote the matrix of
pairwise probabilities $p_{i,j}$. This is a standard setting to
consider in the dueling bandit literature
\cite{yue2012k,szorenyi2015online}.

With knowledge of $\mathcal M$, we can efficiently simulate the best
arm realization. In particular, for the rank over arms
$\texttt{rank} \sim \mathcal M$ generated according to the PL model
(by selecting pairwise comparisons one by one, each time selecting one
from the remaining set with probability proportional to $\nu_i$), we
have~\cite{guiver2009bayesian}:
\[
P(\texttt{rank}|\nu) = \prod_{i=1}^\narms \frac{\nu_{o_i}}{\sum_{j=i}^\narms \nu_{o_j}},
\]
where $\mathbf{o} = \texttt{rank}^{-1}$.
In particular, the marginal probability
in the PL model that an arm is rank 1 (and the best arm) is  just:
\[
P(\texttt{rank}(1) = i) = \frac{\nu_i}{\sum_j \nu_j} = \frac{1}{1+\sum_{j \neq i} \nu_j/\nu_i}.
\]
Finally, knowledge of $\mathcal{M}$ allows us to directly qcalculate each arm's quality from \eqref{pl:pair}.
\[
\nu_j/\nu_i := \frac{p_{j,i}}{1-p_{j,i}}.
\]
Thus, with estimates of the quality parameters ($\mathcal M$) we can estimate $P(\texttt{rank}(1) = i) $ and  directly sample from the best arm distribution and simulate
stochastic-dominance Thompson sampling. 

We will use dueling bandit feedback to estimate pairwise probabilities, denoted
$\tilde{p}_{i,j}$, along with 
the corresponding comparison matrix 
denoted by $\tilde{\mathcal M}$. In particular, let 
$n_{i,j}(t)$  denote the number of times arms $i$ and $j$ are selected up to time $t$. Then
we estimate the pairwise probabilities as:
\begin{align}
  \tilde{p}_{i,j}(t) = \frac{\sum_{n=1}^{n_{i,j(t)}}\mathbbm{1}(r_{i}(n) > r_{j}(n))}{n_{i,j}(t)},~n_{i,j}(t) \geq 1. \label{est:p}
\end{align}

With accurate estimation of the pairwise probabilities, we are able to
accurately approximate the probability that each arm will be rank
1. Denote $\widetilde{\texttt{rank}} \sim \tilde{M}$ as the rank
generated according to the PL model that corresponds with matrix
$\tilde{M}$.  We estimate the ratio of quality parameters
$\widetilde{(\frac{\nu_i}{\nu_j})}$ using $\tilde{p}_{i,j}$, as
\[
\widetilde{(\frac{\nu_i}{\nu_j})} = \frac{\tilde{p}_{i,j} }{1-\tilde{p}_{i,j} }.
\]

Given this, we can then  estimate the probability than arm $i$ has the best
reward realization: 
\begin{align}
  P(\widetilde{\texttt{rank}}(1) = i) = \frac{1}{1+\sum_{j \neq i} \widetilde{(\frac{\nu_j}{\nu_i}})}.\label{est:rank}
\end{align}

%
\begin{lemma}
  When $|\tilde{p}_{i,j} - p_{i,j}| \leq \epsilon$, and $\epsilon$ is small enough, we have that in the Plackett-Luce model,
  $
  |P(\widetilde{\texttt{rank}}(1) = i)  - P(\texttt{rank}(1) = i) | \leq O(\narms \epsilon).
  $\label{rand:perturb}
\end{lemma}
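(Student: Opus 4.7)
The plan is to reduce the bound on $|P(\widetilde{\texttt{rank}}(1) = i) - P(\texttt{rank}(1) = i)|$ to a sum of $\narms-1$ perturbation bounds on the quality ratios $\nu_j/\nu_i$, each induced by the perturbation in the corresponding pairwise probability $p_{i,j}$. Write $f_{ji} \defn \nu_j/\nu_i = p_{j,i}/p_{i,j} = (1-p_{i,j})/p_{i,j}$ and its estimated counterpart $\tilde f_{ji} = (1-\tilde p_{i,j})/\tilde p_{i,j}$, so that the quantity of interest becomes a function of the partial sums $S_i \defn \sum_{j\neq i} f_{ji}$, namely $P(\texttt{rank}(1)=i) = 1/(1+S_i)$ and analogously for the tilde version.

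First, I would show that under the hypothesis $|\tilde p_{i,j} - p_{i,j}| \le \epsilon$ with $\epsilon$ small enough, each ratio is well-controlled: a direct computation gives
\begin{align*}
|\tilde f_{ji} - f_{ji}| = \left|\frac{1-\tilde p_{i,j}}{\tilde p_{i,j}} - \frac{1-p_{i,j}}{p_{i,j}}\right| = \frac{|\tilde p_{i,j} - p_{i,j}|}{\tilde p_{i,j}\, p_{i,j}} \le \frac{\epsilon}{p_{i,j}(p_{i,j}-\epsilon)}.
\end{align*}
Assuming all pairwise probabilities are bounded away from zero (which is what ``$\epsilon$ small enough'' formalizes, i.e.\ $\epsilon \le p_{\min}/2$ for $p_{\min} \defn \min_{i\neq j} p_{i,j}$), this yields $|\tilde f_{ji} - f_{ji}| \le C\epsilon$ for a constant $C = 2/p_{\min}^2$ depending only on the PL parameters.

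Next, I would apply this coordinatewise bound to the denominators: $|\tilde S_i - S_i| \le \sum_{j\neq i}|\tilde f_{ji} - f_{ji}| \le (\narms-1)C\epsilon$. Since $P(\texttt{rank}(1)=i) - P(\widetilde{\texttt{rank}}(1)=i) = (\tilde S_i - S_i)/((1+S_i)(1+\tilde S_i))$ and the denominator is at least $1$, we obtain
\begin{align*}
|P(\widetilde{\texttt{rank}}(1) = i) - P(\texttt{rank}(1) = i)| \le |\tilde S_i - S_i| \le (\narms-1)C\epsilon = O(\narms \epsilon),
\end{align*}
which is the claimed bound.

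The only subtle step is the first one, where one must ensure no $\tilde p_{i,j}$ gets too close to $0$ (or $1$) so that the ratio $f_{ji}$ is Lipschitz in $p_{i,j}$; this is the role of the ``$\epsilon$ small enough'' hypothesis, and it translates into a dependence of the hidden constant on $p_{\min}$. Everything else is elementary: linearity of the perturbation in $S_i$ and the fact that $1/(1+x)$ is $1$-Lipschitz on the positive reals. No nontrivial probabilistic argument is needed beyond what is already captured in the hypothesis.
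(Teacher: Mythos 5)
Your proposal is correct, and it follows the same route the paper indicates: the paper defers the full proof and only notes that the lemma follows from a concentration bound on the estimated quality ratios $\widetilde{(\nu_j/\nu_i)}$, which is exactly your first step ($|\tilde f_{ji}-f_{ji}|\le 2\epsilon/p_{\min}^2$ for $\epsilon\le p_{\min}/2$). The remaining steps --- summing the $\narms-1$ ratio perturbations and using that $x\mapsto 1/(1+x)$ is $1$-Lipschitz on the nonnegative reals --- are elementary and correctly carried out, with the dependence of the hidden constant on $p_{\min}$ properly flagged.
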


This lemma can be established by establishing a concentration bound on
$\widetilde{(\frac{\nu_i}{\nu_j})}$.  We defer the details to a long
version of this paper.

\begin{algorithm}[!ht]
  \caption{(\texttt{Fair\_SD\_DTS})}
  \begin{algorithmic}
    \State At any $t$, select two arms $a_1(t), a_2(t)$, and receive a realization of the following comparison: $\mathbbm{1}(r_{a_t(1)}(t) > r_{a_t(2)}(t))$.
    \State Check the following set:
    \[
    \mathcal O(t) = \{i: n_{i,j}(t) \leq C(\epsilon_2,\delta)\},
    \]
    where $C(\epsilon_2,\delta)$  depends on $\epsilon_2$ and $\delta$.
    \begin{itemize}
    \item If $\mathcal O(t) = \emptyset$, follow  (\texttt{SD\_TS}), using the collected statistics. (\emph{exploitation})
    \item If $\mathcal O(t) \neq \emptyset$, select all pairs of arms equally likely. (\emph{exploration})
    \end{itemize}
   \State Update $\tilde{p}_{a_t(1),a_t(2)}$, for the pair of selected arms (Eqn. (\ref{est:p})).
   \State Update $P(\texttt{rank}(1) = i)$ using $\tilde{M}$ (Eqn. (\ref{est:rank})).
  \end{algorithmic}
  \label{ts:dueling}
\end{algorithm}

Given this, we can derive an algorithm similar to \FSDTS{} that can
achieve calibrated fairness in this setting, by appropriately setting
the length of the exploration phase, and by simulating the probability
that a given arm has the highest reward realization.  This dueling
version of \FSDTS{} is the algorithm \texttt{Fair\_SD\_DTS}, and
detailed in Algorithm~\ref{ts:dueling}.

Due to the need to explore all pairs of arms, a larger number of
exploration rounds $C(\epsilon_2,\delta)$ is needed, and thus the
fairness regret scales as $R_{f}(T) \leq \tilde{O}(\narms^{4/3}
T^{2/3})$:
\begin{theorem}
  For any $\epsilon_2,\delta>0$, setting 
  $$
  C(\epsilon_2,\delta) \defn O(\frac{(2\max D(r_i||r_j)+1)^2k^2}{2\epsilon^2_2}\log \frac{2}{\delta}),
  $$
  we have that (\texttt{Fair\_SD\_DTS}) is $(2,2\epsilon_2,\delta)$-fair w.r.t. total variation; and further it has fairness regret bounded as 
  $R_{f,T} \leq \tilde{O}(\narms^{4/3}T^{2/3})$.
\end{theorem}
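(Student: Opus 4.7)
The plan is to mirror the structure of the proof of Theorem~\ref{thm:main}, but using the dueling-bandit estimator from Eqn.~(\ref{est:p}) together with the perturbation bound in Lemma~\ref{rand:perturb} to relate the simulated best-arm distribution to the true one. First, I would handle the exploration phase trivially: while $\mathcal{O}(t)\neq\emptyset$, pairs are drawn uniformly at random, so $\pi_t$ is uniform over arms, $D(\pi_t(i)\|\pi_t(j))=0$, and the smooth-fairness inequality holds with room to spare. The work is therefore in the exploitation phase.

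In exploitation, every pair $(i,j)$ has been queried at least $C(\epsilon_2,\delta)$ times. I would apply Hoeffding's inequality and a union bound over the $\binom{k}{2}$ pairs to conclude that, with probability at least $1-\delta$, each estimate $\tilde p_{i,j}$ is within $\epsilon_2/(k(2\max D(r_i\|r_j)+1))$ of $p_{i,j}$; this is exactly what the choice $C(\epsilon_2,\delta) = O\bigl((2\max D(r_i\|r_j)+1)^2 k^2\epsilon_2^{-2}\log(2/\delta)\bigr)$ buys us, with the extra $k^2$ factor absorbing both the union bound and the extra accuracy we will need. Then I would invoke Lemma~\ref{rand:perturb} to lift this into a bound $|P(\widetilde{\texttt{rank}}(1)=i)-P(\texttt{rank}(1)=i)|\leq \epsilon_2/(2\max D(r_i\|r_j)+1)$ on the simulated best-arm probabilities for each arm.

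At this point I would replay the proof of Theorem~\ref{thm:main}: the simulated marginal reward distribution $r_i(\tilde\theta_i)$ used inside the embedded \SDTS{} call can be written as a mixture of $r_i(\theta_i)$ and an error term, and convexity of $D$ together with the above perturbation bound yields $D(r_i(\tilde\theta_i)\|r_j(\tilde\theta_j)) \leq D(r_i\|r_j)+\epsilon_2$. Feeding this into Theorem~\ref{thm:sdts}'s coupling argument (which produced the factor 2 via monotonicity plus convexity of $D$) gives $D(\pi_t(i)\|\pi_t(j))\leq 2D(r_i\|r_j)+2\epsilon_2$, establishing the $(2,2\epsilon_2,\delta)$-fairness claim.

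For the fairness-regret bound I would split into the two phases. Because the exploration step uniformly samples among the $\binom{k}{2}$ pairs and each must reach $C(\epsilon_2,\delta)$ observations, the exploration phase lasts on the order of $k^2 C(\epsilon_2,\delta) = \tilde{O}(k^4/\epsilon_2^2)$ rounds and contributes that much to the regret. The exploitation phase contributes $O((\epsilon_2+\delta)T)$ from the per-round miscalibration. Balancing $k^4/\epsilon_2^2 \asymp \epsilon_2 T$ yields $\epsilon_2 = \Theta(k^{4/3}T^{-1/3})$, and setting $\delta = O(T^{-1/2})$ gives the stated bound $R_{f,T}\leq \tilde{O}(k^{4/3}T^{2/3})$. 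The main obstacle I anticipate is the careful handling of Lemma~\ref{rand:perturb}: propagating the pairwise error through the ratios $\tilde p_{i,j}/(1-\tilde p_{i,j})$ and then through the sum defining $P(\widetilde{\texttt{rank}}(1)=i)$ introduces a $k$-factor amplification and requires that $\tilde p_{i,j}$ stay bounded away from $1$, so some care with the regime where $\epsilon_2$ is small relative to $\min_{i,j}(1-p_{i,j})$ will be needed to justify the linear-in-$k$ perturbation estimate and hence the $k^2$ inside $C(\epsilon_2,\delta)$.
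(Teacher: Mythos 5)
Your proposal is correct and follows exactly the route the paper indicates: the paper's own ``proof'' is a one-line deferral stating that the argument mirrors Theorem~\ref{thm:main} once Lemma~\ref{rand:perturb} is in hand, which is precisely the structure you execute (uniform exploration is trivially fair; Hoeffding plus a union bound over pairs; Lemma~\ref{rand:perturb} to convert pairwise error into an $O(k\epsilon)$ perturbation of the best-arm distribution, which is where the extra $k^2$ in $C(\epsilon_2,\delta)$ comes from; and the $k^2 C(\epsilon_2,\delta)$ versus $(\epsilon_2+\delta)T$ balance giving $\epsilon_2 = k^{4/3}T^{-1/3}$ and the stated $\tilde{O}(k^{4/3}T^{2/3})$ bound). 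Your closing caveat about propagating error through the ratios $\tilde p_{i,j}/(1-\tilde p_{i,j})$ is well placed, since the paper proves neither Lemma~\ref{rand:perturb} nor this theorem, deferring both to an unpublished long version; the only cosmetic slip is that in the dueling setting the exploitation step samples a rank from the estimated Plackett--Luce model rather than Bernoulli parameters $\tilde\theta_i$, so the mixture-of-Bernoullis step should be replaced by a direct triangle-inequality comparison of $\pi_t(i)=P(\widetilde{\texttt{rank}}(1)=i)$ with $\PB{i}$ before invoking the factor-2 coupling argument of Theorem~\ref{thm:sdts}.
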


This proof is similar to the fairness regret proof for Theorem \ref{thm:main}, once we established Lemma \ref{rand:perturb}. 
We defer the details to the full version of the paper.


\section{Conclusion}
\label{sec:conclusion}


In this paper we adapt the notion of ``treating similar individuals
similarly''\cite{dwork2012fairness} to the bandits problem, with
similarity based on the distribtion on rewards, and this property of
smooth fairness required to hold along with (approximate) calibrated
fairness.  Calibrated fairness requires that arms that are worse in
expectation still be played if they have a chance of being the best,
and that better arms be played significantly more often than weaker
arms.

We analyzed Thompson-sampling based algorithms, and showed that a
variation with an initial uniform exploration phase can achieve a low
regret bound with regard to calibration as well as smooth fairness.
We further discussed how to adopt this algorithm to a dueling bandit
setting together with Plackett-Luce.

In future work, it will be interesting to consider contextual bandits
(in the case in which the context still leaves residual uncertainty
about quality), to establish lower bounds for fairness regret, to
consider ways to achieve good calibrated fairness uniformly across
rounds, and to study the utility of fair bandits algorithms (e.g.,
with respect to standard notions of regret) and while allowing for a
tradeoff against smooth fairness for different divergence functions
and fairness regret.  In addition, it will be interesting to explore
the connection between strictly-proper scoring rules and calibrated
fairness, as well as to extend Lemma \ref{rand:perturb} to more
general ranking models.

\paragraph{Acknowledgements.}
The research has received funding from: the People Programme (Marie Curie Actions) of the European Union's Seventh Framework Programme (FP7/2007-2013) under REA grant agreement 608743, the Future of Life Institute, and SNSF Early Postdoc.Mobility fellowship.
\bibliographystyle{plainnat}
\bibliography{bibliography,myrefs}

\end{document}